\newtheorem{theorem}{Proposition}
\begin{document}

\title{Composite Likelihood Estimation for Restricted Boltzmann machines}

\author{Muneki Yasuda$^*$, Shun Kataoka, Yuji Waizumi and Kazuyuki Tanaka\\
\emph{Graduate School of Science and Engineering, Yamagata University, Japan$^*$}\\
\emph{Graduate School of Information Sciences, Tohoku University, Japan}\\
\emph{muneki@yz.yamagata-u.ac.jp$^*$}\\
}

\maketitle
\thispagestyle{empty}

\begin{abstract}
Learning the parameters of graphical models using the maximum likelihood estimation is generally hard which requires an approximation. 
Maximum composite likelihood estimations are statistical approximations of the maximum likelihood estimation which are higher-order generalizations of the 
maximum pseudo-likelihood estimation. 
In this paper, we propose a composite likelihood method and investigate its property. 
Furthermore, we apply our composite likelihood method to restricted Boltzmann machines.
\end{abstract}

\Section{Introduction}

Learning the parameters of graphical models using maximum likelihood (ML) estimation is generally hard due to the intractability of computing the normalizing constant and its gradients.
Maximum pseudo-likelihood (PL) estimation~\cite{PL1975} is a statistical approximation of the ML estimation. Unlike the ML estimation, 
the maximum PL estimation is computationally fast, but however, the estimates obtained by this method are not very accurate.

\textit{Composite likelihoods} (CLs)~\cite{CLM1988} are higher-order generalizations of the PL. 
Asymptotic analysis shows that maximum CL estimation is statistically more efficient than the maximum PL estimation~\cite{Liang2008}. 
It has been known that the maximum PL estimation is asymptotically consistent~\cite{PL1975}. Like this, the maximum CL estimation is also asymptotically consistent~\cite{CLM1988}. 
Furthermore, the maximum CL estimation has an asymptotic variance that is smaller than the maximum PL estimation but larger than ML estimation~\cite{Liang2008,Dillon2010}. 
Recently, it has been found that the maximum CL estimation corresponds to a block-wise contrastive divergence learning~\cite{BlockCD2010}.

In the maximum CL estimation, one can freely choose the size of ``blocks'' which contain several variables, 
and it is widely believed that by increasing the size of blocks, one can capture more dependency relations in the model and increase the accuracy of the estimates~\cite{BlockCD2010}. 
In the first part of this paper, we introduce a systematic choice of blocks in the maximum CL estimation. 
In our proposed choice of blocks, it is guaranteed that one can obtain quantitatively closer value to the true likelihood by increasing the size of blocks.
In the latter part of this paper, we apply our maximum CL estimation to restricted Boltzmann machines (RBMs)~\cite{Hinton2002} 
and show results of numerical experiments using synthetic data.

\Section{Composite Likelihood Estimation}

For the $n$ dimensional discrete random variable $\bm{x} := \{x_i \mid i \in \Omega =\{ 1,2,\ldots, n\}\}$, let us consider the probabilistic model expressed as 
\begin{align}
P(\bm{x} \mid \bm{\theta} ):= Z(\bm{\theta})^{-1}\exp \big( - E(\bm{x} \mid \bm{\theta})\big),
\label{eq:P(x)}
\end{align}
where $E(\bm{x} \mid \bm{\theta})$ is the energy function having an arbitrary functional form and 
$
Z(\bm{\theta}):= \sum_{\bm{x}}\exp \big( - E(\bm{x} \mid \bm{\theta})\big)
$
is the normalizing constant. 
Let us suppose that the data set composed of $M$ data, $\mcal{D}:=\{\bm{d}^{(\mu)}\mid \mu = 1,2,\ldots, M\}$ is obtained. 
Each data is statistically-independent of each other.
In the perspective of the ML estimation, we determine the optimal $\bm{\theta}$ by maximizing the log-likelihood function defined by
\begin{align}
\mcal{L}_{\mrm{ML}}(\bm{\theta}):=\sum_{\bm{x}}Q(\bm{x}) \ln P(\bm{x} \mid \bm{\theta} ),
\label{eq:L_ML}
\end{align}
where $Q(\bm{x})$ is the empirical distribution of the data set, i.e. the histogram of data set, expressed by
$
Q(\bm{x}):=M^{-1}\sum_{\mu = 1}^M \delta(\bm{x},\bm{d}^{(\mu)})
$,
where we define
\begin{align*}
\delta(\bm{x},\bm{d}^{(\mu)})&:=
\begin{cases}
1 & \bm{x}=\bm{d}^{(\mu)}\\
0 & \bm{x}\not=\bm{d}^{(\mu)}
\end{cases}
.
\end{align*}
However, maximizing $\mcal{L}_{\mrm{ML}}(\bm{\theta})$ with respect to $\bm{\theta}$ is computationally expensive. 
This generally requires the computational cost of $O(e^n)$ due to multiple summations. 

The maximum CL estimation is a statistical approximation technique of the ML estimation~\cite{CLM1988}. 
In the maximum CL estimation, one divides $\Omega$ into some different subsets termed \textit{blocks}, $c_1, c_2,\ldots c_r \subseteq \Omega$, with allowing overlaps among blocks. 
Note that the relation $c_1 \cup  c_2 \cup \ldots \cup c_r = \Omega$ must be kept. 
We denote the family of these blocks, $c_1, c_2,\ldots c_r$, by $\mcal{F}$.
For the family $\mcal{F}$, the CL is defined by
\begin{align}
\mcal{L}_{\mcal{F}}(\bm{\theta}) := \Lambda_{\mcal{F}}  \sum_{c \in \mcal{F}} \sum_{\bm{x}}Q(\bm{x})\ln P(\bm{x}_{c} \mid \bm{x}_{\bar{c}}, \bm{\theta}),
\label{eq:CL}
\end{align}
where, for a set $A \subseteq \Omega$, the expression $\bm{x}_A$ is defined as $\bm{x}_A := \{x_i \mid i \in A \}$ and $\bar{A}:= \Omega \setminus A$. 
The notation $\Lambda_{\mcal{F}}$ is defined by $\Lambda_{\mcal{F}}:=|\mcal{F}|^{-1}$, 
where the notation $|\cdots|$ denotes the size of the assigned set.
From the Bayesian theorem, the conditional probability in the CL is obtained by
$
P(\bm{x}_{c} \mid \bm{x}_{\bar{c}}, \bm{\theta})= P(\bm{x} \mid \bm{\theta})\big(\sum_{\bm{x}_c}P(\bm{x} \mid \bm{\theta})\big)^{-1}
$. 
In the CL estimation, one maximizes the CL instead of the true log-likelihood. 
If each block is composed of just one variable, i.e. $c_i = \{i\}$ and $r=n$, the CL is reduced to the PL~\cite{PL1975}. 
Hence, the CL can be regarded as a generalization of the PL.
On the other hand, if $r=1$ and $c_1$ is composed of all variables, the CL is obviously equivalent to the true log-likelihood $\mcal{L}_{\mrm{ML}}(\bm{\theta})$.
\begin{theorem} \label{prop:upper}
The CL generally is an upper bound on the true log-likelihood $\mcal{L}_{\mrm{ML}}(\bm{\theta})$.
\end{theorem}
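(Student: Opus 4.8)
The plan is to exploit the factorization of each conditional probability appearing in the composite likelihood. First I would rewrite the conditional via the Bayes formula already given in the excerpt, recognizing the denominator $\sum_{\bm{x}_c}P(\bm{x} \mid \bm{\theta})$ as the marginal distribution $P(\bm{x}_{\bar{c}} \mid \bm{\theta})$ over the complement of the block $c$. Taking logarithms then splits the conditional log-probability additively:
\begin{align*}
\ln P(\bm{x}_c \mid \bm{x}_{\bar{c}}, \bm{\theta}) = \ln P(\bm{x} \mid \bm{\theta}) - \ln P(\bm{x}_{\bar{c}} \mid \bm{\theta}).
\end{align*}

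The key observation is that $P(\bm{x}_{\bar{c}} \mid \bm{\theta})$ is itself a probability, hence lies in $[0,1]$, so $\ln P(\bm{x}_{\bar{c}} \mid \bm{\theta}) \le 0$. Subtracting this non-positive quantity can only increase the value, which yields the pointwise bound
\begin{align*}
\ln P(\bm{x}_c \mid \bm{x}_{\bar{c}}, \bm{\theta}) \ge \ln P(\bm{x} \mid \bm{\theta})
\end{align*}
valid for every configuration $\bm{x}$ and every block $c \in \mcal{F}$.

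From here I would multiply both sides by the non-negative empirical weight $Q(\bm{x})$ and sum over $\bm{x}$, which preserves the inequality and shows that each block's contribution is bounded below by the full log-likelihood $\mcal{L}_{\mrm{ML}}(\bm{\theta})$. Finally, summing these $|\mcal{F}|$ block-wise bounds and applying the normalization $\Lambda_{\mcal{F}} = |\mcal{F}|^{-1}$, so that $\Lambda_{\mcal{F}}|\mcal{F}| = 1$, produces $\mcal{L}_{\mcal{F}}(\bm{\theta}) \ge \mcal{L}_{\mrm{ML}}(\bm{\theta})$, as claimed.

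I do not expect any serious obstacle, since the argument is elementary and rests entirely on the fact that a marginal probability never exceeds one. The only point requiring a little care is the averaging bookkeeping at the end: one must verify that summing the block-wise bounds and rescaling by $\Lambda_{\mcal{F}}$ reproduces exactly $\mcal{L}_{\mrm{ML}}(\bm{\theta})$ on the right-hand side, rather than a rescaled multiple of it. This is precisely where the specific normalization constant $\Lambda_{\mcal{F}}$ in the definition of the CL is needed.
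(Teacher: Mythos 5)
Your proof is correct and is essentially the paper's own argument: both rest on writing $\ln P(\bm{x}_c \mid \bm{x}_{\bar{c}}, \bm{\theta}) = \ln P(\bm{x} \mid \bm{\theta}) - \ln \sum_{\bm{x}_c} P(\bm{x} \mid \bm{\theta})$ and observing that the subtracted marginal log-probability is non-positive for a discrete distribution, with the normalization $\Lambda_{\mcal{F}}|\mcal{F}|=1$ recovering $\mcal{L}_{\mrm{ML}}(\bm{\theta})$ exactly. The only cosmetic difference is that you argue pointwise before averaging over $Q(\bm{x})$, whereas the paper packages the same quantity into the remainder term $\mcal{R}_{\mcal{F}}(\bm{\theta})$ of equation (\ref{eq:R_F}) and notes it is non-positive.
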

\begin{proof}
The relation between original log-likelihood and the CL can be expressed as
$
\mcal{L}_{\mrm{ML}}(\bm{\theta})=\mcal{L}_{\mcal{F}}(\bm{\theta})+\mcal{R}_{\mcal{F}}(\bm{\theta})
$,
where the remainder term is defined as
\begin{align}
\mcal{R}_{\mcal{F}}(\bm{\theta}):=
\Lambda_{\mcal{F}} \sum_{c \in \mcal{F}} \sum_{\bm{x}}Q(\bm{x})\ln \sum_{\bm{x}_{c}}P(\bm{x} \mid \bm{\theta}).
\label{eq:R_F}
\end{align}
Since $P(\bm{x} \mid \bm{\theta})$ is a discrete distribution and $\Lambda_{\mcal{F}}$ is positive, the remainder term, $\mcal{R}_{\mcal{F}}(\bm{\theta})$, is less than or equal to zero.
Therefore, the inequality $\mcal{L}_{\mrm{ML}}(\bm{\theta})\leq \mcal{L}_{\mcal{F}}(\bm{\theta})$ is generally satisfied for any $\bm{\theta}$ and for any choice of $\mcal{F}$.
\end{proof}

\SubSection{Systematic Choice of Blocks}

In this section, we introduce a particular choice of the blocks in which the CL has a good property. 
For $1\leq k\leq n$, we define the family $\mcal{F}_k$ whose elements are all possible blocks composed of $k$ different variables, i.e.
$\mcal{F}_k:=\{\{i_1,i_2,\ldots,i_k\} \mid i_1 < i_2 < \cdots < i_k \in \Omega\}$. 
For example, when $n=4$, $\mcal{F}_2 = \{ \{ 1,2\},\{1,3\}, \{1,4\}, \{2,3\},\{2,4\},\{3,4\}\}$ and 
$\mcal{F}_3 = \{ \{ 1,2,3\},\{1,2,4\}, \{1,3,4\}, \{2,3,4\}\}$. 
For the family $\mcal{F}_k$, the CL is expressed as
\begin{align}
\mcal{L}_{\mcal{F}_k}(\bm{\theta})=\Lambda_{\mcal{F}_k}\sum_{c \in \mcal{F}_k} \sum_{\bm{x}}Q(\bm{x})\ln P(\bm{x}_{c} \mid \bm{x}_{\bar{c}}, \bm{\theta}),
\label{eq:CL-k}
\end{align}
where $\Lambda_{\mcal{F}_k}=|\mcal{F}_k|^{-1} = k!(n-k)!/n!$.
It is noteworthy that $\mcal{L}_{\mcal{F}_1}(\bm{\theta})$ is reduced to the PL and $\mcal{L}_{\mcal{F}_n}(\bm{\theta})=\mcal{L}_{\mrm{ML}}(\bm{\theta})$. 
\begin{theorem} \label{prop:mono}
For $1\leq k\leq n$, the CLs for the family $\mcal{F}_k$ is bounded as
$
\mcal{L}_{\mcal{F}_1}(\bm{\theta})\geq  \mcal{L}_{\mcal{F}_2}(\bm{\theta})\geq \cdots \geq \mcal{L}_{\mcal{F}_n}(\bm{\theta})=\mcal{L}_{\mrm{ML}}(\bm{\theta})
$ for any $\bm{\theta}$.
\end{theorem}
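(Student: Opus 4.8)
The plan is to work with the remainder term from the decomposition $\mcal{L}_{\mrm{ML}}(\bm{\theta}) = \mcal{L}_{\mcal{F}_k}(\bm{\theta}) + \mcal{R}_{\mcal{F}_k}(\bm{\theta})$ that was established in the proof of Proposition \ref{prop:upper}. Since $\mcal{L}_{\mcal{F}_k} = \mcal{L}_{\mrm{ML}} - \mcal{R}_{\mcal{F}_k}$, the desired chain $\mcal{L}_{\mcal{F}_k}(\bm{\theta}) \geq \mcal{L}_{\mcal{F}_{k+1}}(\bm{\theta})$ is equivalent to $\mcal{R}_{\mcal{F}_k}(\bm{\theta}) \leq \mcal{R}_{\mcal{F}_{k+1}}(\bm{\theta})$ for each $k$. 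In words, it suffices to prove that the (nonpositive) remainder increases toward $0$ as the block size grows, and then sum over $k$.

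Next I would rewrite the remainder in terms of marginals. Because $\sum_{\bm{x}_c} P(\bm{x}\mid\bm{\theta})$ is exactly the marginal $P(\bm{x}_{\bar c}\mid\bm{\theta})$ over the complementary variables and its logarithm depends only on $\bm{x}_{\bar c}$, I set $m := n-k$ and define, for a subset $A \subseteq \Omega$, the quantity $T_A := \sum_{\bm{x}} Q(\bm{x}) \ln P(\bm{x}_A \mid \bm{\theta})$. As $c$ ranges over $\mcal{F}_k$, its complement $\bar c$ ranges over all $m$-element subsets, and since $\Lambda_{\mcal{F}_k} = \binom{n}{k}^{-1} = \binom{n}{m}^{-1}$, the remainder is the average $\mcal{R}_{\mcal{F}_k}(\bm{\theta}) = \binom{n}{m}^{-1}\sum_{|A|=m} T_A =: g(m)$. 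The target inequality then reads $g(m) \leq g(m-1)$, i.e. $g$ is non-increasing in the marginal size $m$.

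The core of the argument is an averaging inequality. For any nested pair $B \subset A$ with $|A|=m$ and $|B|=m-1$, summing out the extra variable gives $P(\bm{x}_B \mid \bm{\theta}) \geq P(\bm{x}_A \mid \bm{\theta})$, hence $\ln P(\bm{x}_A \mid \bm{\theta}) \leq \ln P(\bm{x}_B \mid \bm{\theta})$ pointwise; multiplying by $Q(\bm{x}) \geq 0$ and summing yields $T_A \leq T_B$. I would then sum this over all incidence pairs $(A,B)$ with $B \subset A$, $|A|=m$, $|B|=m-1$. Each $m$-set $A$ contains exactly $m$ such subsets $B$, while each $(m-1)$-set $B$ is contained in exactly $n-m+1$ such supersets $A$, so double counting gives $m\sum_{|A|=m} T_A \leq (n-m+1)\sum_{|B|=m-1} T_B$. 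Dividing by the identity $\binom{n}{m}\,m = (n-m+1)\binom{n}{m-1}$ converts this precisely into $g(m) \leq g(m-1)$, which is what is needed.

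The only delicate point is the bookkeeping in that final step: one must line up the combinatorial multiplicities $m$ and $n-m+1$ against the binomial normalizations $\Lambda_{\mcal{F}_k}$ so that the raw inequality between sums becomes an inequality between the correctly normalized averages. Everything else — the marginalization bound $T_A \leq T_B$ and the two-way count — is routine. As a sanity check, the endpoints are consistent: $g(0) = 0$ matches $\mcal{R}_{\mcal{F}_n}(\bm{\theta}) = 0$ and hence $\mcal{L}_{\mcal{F}_n} = \mcal{L}_{\mrm{ML}}$, while $m = n-1$ recovers the PL case at the top of the chain.
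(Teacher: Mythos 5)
Your proof is correct and takes essentially the same route as the paper's: you reduce the chain to the monotonicity $\mcal{R}_{\mcal{F}_k}(\bm{\theta}) \leq \mcal{R}_{\mcal{F}_{k+1}}(\bm{\theta})$ of the remainder from Proposition~\ref{prop:upper} and derive it from the pointwise marginalization bound, which is exactly the content of the paper's inequality $D_k(\bm{\theta}) \geq 0$. Your explicit double count over incidence pairs $(A,B)$ combined with the identity $m\binom{n}{m} = (n-m+1)\binom{n}{m-1}$ is precisely the ``short manipulation'' the paper leaves implicit when it rewrites $D_k(\bm{\theta})$ with the prefactor $\Lambda_{\mcal{F}_k}/(k-n)$.
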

\begin{proof}
The relation between $\mcal{L}_{\mrm{ML}}(\bm{\theta})$ and $\mcal{L}_{\mcal{F}_k}(\bm{\theta})$ is 
$
\mcal{L}_{\mrm{ML}}(\bm{\theta}) = \mcal{L}_{\mcal{F}_k}(\bm{\theta}) + \mcal{R}_{\mcal{F}_k}(\bm{\theta})
$,
where, from equation (\ref{eq:R_F}) the remainder term is 
$
\mcal{R}_{\mcal{F}_k}(\bm{\theta})
=\Lambda_{\mcal{F}_k}\sum_{c \in \mcal{F}_k} \sum_{\bm{x}}Q(\bm{x})\ln \sum_{\bm{x}_c }P(\bm{x} \mid \bm{\theta})
$.
Let us consider the difference between the the remainder terms $D_k(\bm{\theta}):=\mcal{R}_{\mcal{F}_{k+1}}(\bm{\theta})
-\mcal{R}_{\mcal{F}_k}(\bm{\theta})$. 
After a short manipulation, the difference $D_k(\bm{\theta})$ yields
\begin{align*}
&D_k(\bm{\theta})\nn
&=\frac{\Lambda_{\mcal{F}_k}}{k-n}\sum_{c \in \mcal{F}_k} \sum_{i \in \bar{c}}
\sum_{\bm{x}}Q(\bm{x}) \ln\frac{\sum_{\bm{x}_c}P(\bm{x} \mid \bm{\theta})}{ \sum_{\bm{x}_c,x_i}P(\bm{x} \mid \bm{\theta})}.
\end{align*}
Hence, for $1\leq k\leq n-1$, the inequality $D_k(\bm{\theta})\geq 0$ holds, because $P(\bm{x} \mid \bm{\theta})$ is a discrete distribution. 
Therefore, for $1\leq k\leq n-1$, the inequality
\begin{align}
\mcal{L}_{\mcal{F}_k}(\bm{\theta})\geq \mcal{L}_{\mcal{F}_{k+1}}(\bm{\theta})
\label{eq:bound1}
\end{align}
is satisfied.
From equation (\ref{eq:bound1}), we reach to the proposition.
\end{proof}
From propositions \ref{prop:upper} and \ref{prop:mono}, we found that, for $1\leq k\leq n-1$, 
the $k$th-order CL, $\mcal{L}_{\mcal{F}_k}(\bm{\theta})$, is always an upper bound on the true log-likelihood and 
it monotonically approaches the true log-likelihood with the increase of $k$. 
Therefore, it is guaranteed that a larger $k$ gives quantitatively better approximation of the true log-likelihood.

\Section{Application to Restricted Boltzmann Machines} \label{sec:RBM}

In this section, we apply the CL estimation to RBMs~\cite{Hinton2002}. 
RBMs are Boltzmann machines consisting of visible variables, whose states can be observed, 
and hidden variables, whose states are not specified by observed data. 
RBMs are defined on (complete) bipartite graphs consisting of two layers. 
One of them is a layer of visible variables, termed visible layer, and the other one is a layer of hidden variables, termed hidden layer.
There are connections between visible variables and hidden variables, and any interlayer connections are not allowed. 

We denote the sets of labels of visible variables and hidden variables by $\Omega$ and $H$, respectively, and we denote   
the state variable of visible variable $i\in \Omega$ by $x_i$ and the state variable of hidden variable $j\in H$ by $h_j$. 
All state variables are binary random variables that take $+1$ or $-1$. 
The joint distribution of RBM is represented by 
\begin{align}
P_{\mrm{RBM}}(\bm{x},\bm{h} \mid \bm{\theta})
&\propto 
\exp\Big(\sum_{i\in \Omega} \alpha_i x_i + \sum_{j\in H}\beta_j h_j\nn
\aleq + \sum_{i\in \Omega}\sum_{ j\in H}w_{i,j}x_ih_j \Big).
\label{eq:RBM}
\end{align}
The parameters $\bm{\alpha} = \{\alpha_i\mid i\in \Omega\}$ and $\bm{\beta} = \{\beta_j\mid j\in H\}$ are biases 
(or sometimes called thresholds) for visible variables and 
hidden variables, respectively, 
and the parameters $\bm{w} =\{w_{i,j} \mid i\in \Omega,\> j\in H\}$ are weights of connections between the visible variables and the hidden variables.
In equation (\ref{eq:RBM}), we denote $\bm{\theta} = \bm{\alpha}\cup \bm{\beta}\cup \bm{w}$ for a short description.

Given an empirical distribution $Q(\bm{x})$ for the visible variables, 
the log-likelihood of RBM in equation (\ref{eq:RBM}) is expressed as 
\begin{align}
\mcal{L}_{\mrm{ML}}(\bm{\theta})=\sum_{\bm{x}}Q(\bm{x}) \ln P_{\mrm{RBM}}(\bm{x} \mid \bm{\theta}),
\label{eq:ML-RBM}
\end{align}
where $P_{\mrm{RBM}}(\bm{x} \mid \bm{\theta})$ is the marginal distribution obtained by
$
P_{\mrm{RBM}}(\bm{x} \mid \bm{\theta})=\sum_{\bm{h}}P_{\mrm{RBM}}(\bm{x},\bm{h} \mid \bm{\theta}).
$
The marginal distribution can be explicitly expressed as
$
P_{\mrm{RBM}}(\bm{x} \mid \bm{\theta})\propto
\exp\big(-E_{\mrm{RBM}}(\bm{x} \mid \bm{\theta}) \big),
$
where
$
E_{\mrm{RBM}}(\bm{x} \mid \bm{\theta})
:=-\sum_{i\in \Omega} \alpha_i x_i 
- \sum_{j \in H} \ln \mcal{C}_j(\bm{x}\mid \bm{\theta})
$
and
$
\mcal{C}_j(\bm{x}\mid \bm{\theta}):=\cosh\big(\beta_j + \sum_{i\in \Omega}w_{i,j}x_i\big)
$.
The CL estimation can be applied to the RBM. 
Indeed, the PL estimation for the RBM was introduced~\cite{RBM_PL2010}.
By applying equation (\ref{eq:CL-k}) to equation (\ref{eq:ML-RBM}), 
we can express the $k$th-order CL for the RBM as 
\begin{align}
&\mcal{L}_{\mcal{F}_k}(\bm{\theta})\nn
&=\Lambda_{\mcal{F}_k}\sum_{c \in \mcal{F}_k}\sum_{i\in c} \alpha_i \ave{x_i}_\mcal{D} 
+ \sum_{j \in H}\ave{\ln \mcal{C}_j(\bm{x}, \bm{\theta})}_{\mcal{D}}\nn
\aleq
-\Lambda_{\mcal{F}_k}\sum_{c \in \mcal{F}_k}\Big<\ln \sum_{\bm{x}_c} \exp\big(-E_{\mrm{RBM}}^{(c)}(\bm{x} \mid \bm{\theta})\big)\Big>_{\mcal{D}},
\label{eq:CLk-RBM}
\end{align}
where the notation $\ave{\cdots}_{\mcal{D}}$ denotes the average over the empirical distribution and
$
E_{\mrm{RBM}}^{(c)}(\bm{x} \mid \bm{\theta}):=-\sum_{i\in c} \alpha_i x_i - \sum_{j \in H} \ln \mcal{C}_j(\bm{x}\mid \bm{\theta})
$.
The gradients, $\Delta_{\theta}^{(k)}:=\partial \mcal{L}_{\mcal{F}_k}(\bm{\theta})/ \partial \theta$, with respect to the parameters 
$\alpha_i$, $\beta_j$ and $w_{i,j}$ are
\begin{align}
\Delta_{\alpha_i}^{(k)}&\propto  \ave{x_i}_\mcal{D} 
-|\mcal{F}_k(i)|^{-1}\sum_{c \in \mcal{F}_k(i)} \ave{x_i}_c,
\label{eq:gradient-a}\\
\Delta_{\beta_j}^{(k)}&\propto \ave{\mcal{T}_j(\bm{x}\mid \bm{\theta})}_{\mcal{D}}
-\Lambda_{\mcal{F}_k}\sum_{c \in \mcal{F}_k}\ave{\mcal{T}_j(\bm{x}\mid \bm{\theta})}_c
\label{eq:gradient-b}
\end{align}
and 
\begin{align}
\Delta_{w_{i,j}}^{(k)}&\propto\ave{x_i \mcal{T}_j(\bm{x}\mid \bm{\theta})}_{\mcal{D}}
-\Lambda_{\mcal{F}_k}\sum_{c \in \mcal{F}_k} \ave{x_i \mcal{T}_j(\bm{x}\mid \bm{\theta})}_c,
\label{eq:gradient-w}
\end{align}
respectively, 
where the notation $\mcal{F}_k(i)$ is the subset of $\mcal{F}_k$ whose all blocks include $i$, 
i.e. $\mcal{F}_k(i):=\{c \mid i \in c \in \mcal{F}_k\}$, the notation $\ave{\cdots}_c$ is defined as
\begin{align*}
\ave{\cdots}_c:=\bigg< \frac{\sum_{\bm{x}_c}(\cdots)\exp\big(-E_{\mrm{RBM}}^{(c)}(\bm{x} \mid \bm{\theta})\big)}
{\sum_{\bm{x}_c} \exp\big(-E_{\mrm{RBM}}^{(c)}(\bm{x} \mid \bm{\theta})\big)} \bigg>_{\mcal{D}},
\end{align*}
for the block $c$, 
and $\mcal{T}_j(\bm{x}\mid \bm{\theta}):=\tanh\big(\beta_j + \sum_{i\in \Omega}w_{i,j}x_i\big)$. 
The computational cost that is required to compute all of them is $O(n^kM|H|)$. 
Note that, when $k=n$, the gradients (\ref{eq:gradient-a})--(\ref{eq:gradient-w}) yield the gradients of the true log-likelihood.

\SubSection{Numerical Experiments}

In this section, we show results of numerical experiments using synthetic data. 
We use an RBM consisting of 5 visible variables and 10 hidden variables as the learning machine, and we generate $M=70$ data from an RBM consisting of 5 visible variables and 17 hidden variables by using the Markov chain Monte Carlo method. 
In the generative RBM, we set $\alpha_i = 0.1$, $\beta_j = - 0.1$ and $w_{i,j} = 0.2$ for all $i$ and $j$. 
We compare the first-, the second- and the third-order CL estimation with the exact ML estimation. 
We maximize the CLs, i.e. $\mcal{L}_{\mcal{F}_1}(\bm{\theta})$, $\mcal{L}_{\mcal{F}_2}(\bm{\theta})$ and $\mcal{L}_{\mcal{F}_3}(\bm{\theta})$, 
and the true log-likelihood, $\mcal{L}_{\mrm{ML}}(\bm{\theta})$, by using the gradient ascent method (GAM) with the update rate of 0.1. 
In the four different GAMs, the same initial values of parameters that are randomly generated are used. 

Figure \ref{fig:CL} shows that the plot of the CLs shown in equation (\ref{eq:CLk-RBM}) and the true log-likelihood shown in (\ref{eq:ML-RBM}) 
against the number of iterations of GAMs with the gradients (\ref{eq:gradient-a})--(\ref{eq:gradient-w}).
\begin{figure}[hbt]
\begin{center}
\includegraphics[height=4.5cm]{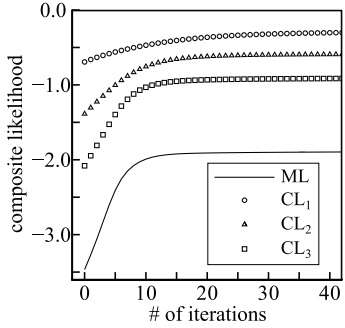} 
\end{center}
\caption{Plot of composite likelihoods against the number of iterations of GAMs. Each point is averaged over 30 trials.}
\label{fig:CL}
\vspace{-5mm}
\end{figure}
In this plot, the ``ML'' is the true log-likelihood obtained by the exact ML estimation 
and ``CL$_k$'' are the $k$th-order CLs, $\mcal{L}_{\mcal{F}_k}(\bm{\theta})$, obtained by the $k$th-order CL estimations. 
One can see that the CL approach the true log-likelihood as $k$ increase.

\begin{figure}[hbt]
\begin{center}
\includegraphics[height=4.5cm]{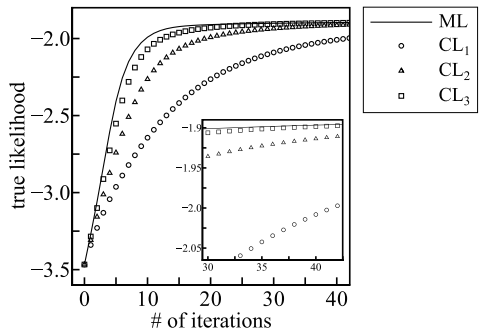} 
\end{center}
\caption{Plot of true log-likelihoods against the number of iterations of GAMs. Each point is averaged over 30 trials.}
\label{fig:ML}
\vspace{-5mm}
\end{figure}
Figure \ref{fig:ML} shows the plot of the true log-likelihoods, $\mcal{L}_{\mrm{ML}}(\bm{\theta})$, against the number of iterations of GAMs with the gradients (\ref{eq:gradient-a})--(\ref{eq:gradient-w}).
In the plot, the ``ML'' is the true log-likelihood with the parameters calculated by the exact ML estimation and 
the ``CL$_k$'' are the true log-likelihoods with the parameters calculated by $k$th-order CL estimations. 
One can see that higher-order CL estimations give better and faster convergence. 

After 50000 iterations, the average values (averaged over 30 trials) of the true log-likelihood obtained by the exact ML estimation, the first-, the second- and the third-order CL estimation
are $-1.741$, $-1.796$, $-1.742$ and $-1.741$, respectively. 
Table \ref{tab:MAD} shows the mean absolute errors (MADs) of the estimations, $\bm{\alpha}$, $\bm{\beta}$ and $\bm{w}$, between obtained by the exact ML estimation and by the $k$th-order CL estimation 
after 50000 iterations. Each MAD is averaged over 30 trials.
\begin{table}[htbp]
\vspace{-5mm}
\begin{center}
\caption{MADs of estimations between obtained by exact ML estimation and 
by $k$th-order CL estimation after 50000 iterations.}\label{tab:MAD}
\vspace{2mm}
\begin{tabular}{|c||c|c|c|} \hline
      &$\bm{\alpha}$ &$\bm{\beta}$ &$\bm{w}$ \\ \hline\hline
$k=1$ &0.377    &0.431    &0.360 \\ \hline
$k=2$ &0.223    &0.223    &0.192 \\ \hline
$k=3$ &0.128    &0.114    &0.103 \\ \hline
\end{tabular}
\end{center}
\vspace{-5mm}
\end{table}
One can see that higher-order CL estimations give quantitatively better estimations.

\Section{Conclusion}

In this paper, we introduced the systematic choice of blocks for the maximum CL estimation 
which guarantees that the $k$th-order CL monotonically approaches the true log-likelihood with the increase of $k$. 
This property does not depend on details of graphical models. 

Furthermore, we applied our CL method to learning of RBMs and formulate learning algorithm explicitly. 
In our numerical experiments for synthetic data, we made sure that the higher-order CLs have better performances. 
As we have seen in section \ref{sec:RBM}, the computational cost increases when higher-order CLs are employed. 
Nonetheless, it is possible to trade off computation time for increased accuracy by switching to higher-order CLs.

\subsubsection*{acknowledgment}
This work was partly supported by Grants-In-Aid (No. 21700247, No. 22300078 and No. 23500075) 
for Scientific Research from the Ministry of Education, Culture, Sports, Science and Technology, Japan.

\balance
\bibliographystyle{latex12}
\bibliography{latex12}

\end{document}